\newcommand{\norm}[1]{\left\lVert#1\right\rVert}
\newcommand{\R}{\mathbb{R}}
\newcommand{\Z}{\mathbb{Z}}
\newcommand{\B}{\mathbb{B}}
\newcommand{\uu}{\mathbf{u}}
\newcommand{\s}{\mathbf{s}}
\begin{document}
\title{Repeatable and Reliable Efforts of Accelerated Risk Assessment in Robot Testing}
\titlerunning{Repeatable and Reliable Efforts of Accelerated Risk Assessment}
%
\author{Linda Capito\inst{1} \and
Guillermo A. Castillo\inst{2} \and
Bowen Weng\inst{3}}
\authorrunning{Capito et al.}
%
\institute{Transportation Research Center Inc., East Liberty OH 43319, USA \email{capitol@trcpg.com}\and
The Ohio State University, Columbus OH 43210 USA \email{castillomartinez.2@osu.edu}\and
Iowa State University, Ames IA 50011, USA
\email{bweng@iastate.edu}}

\maketitle              
\begin{abstract}
Risk assessment of a robot in controlled environments, such as laboratories and proving grounds, is a common means to assess, certify, validate, verify, and characterize the robots' safety performance before, during, and even after their commercialization in the real-world. A standard testing program that acquires the risk estimate is expected to be (i)~repeatable, such that it obtains similar risk assessments of the same testing subject among multiple trials or attempts with the similar testing effort by different stakeholders, and (ii) reliable against a variety of testing subjects produced by different vendors and manufacturers. Both repeatability and reliability are fundamental and crucial for a testing algorithm's validity, fairness, and practical feasibility, especially for standardization. However, these properties are rarely satisfied or ensured, especially as the subject robots become more complex, uncertain, and varied. This issue was present in traditional risk assessments through Monte-Carlo sampling, and remains a bottleneck for the recent \emph{accelerated} risk assessment methods, primarily those using importance sampling. This study aims to enhance existing accelerated testing frameworks by proposing a new algorithm that provably integrates repeatability and reliability with the already established formality and efficiency. It also features demonstrations assessing the risk of instability from frontal impacts, initiated by push-over disturbances on a controlled inverted pendulum and a 7-DoF planar bipedal robot Rabbit managed by various control algorithms.

\end{abstract}
\section{Introduction}



The \emph{repeatability}~\cite{mooring1986determination,falco2020benchmarking}, also known as reproducibility~\cite{pineau2021improving,impagliazzo2022reproducibility} and replicability~\cite{eaton2024replicable}, is a commonly observed term across a variety of disciplines and applications. Repeatability of measurements refers to ``the variation in repeat measurements made on the same subject under identical conditions''~\cite{taylor1994guidelines}.
In the context of testing, such a measurement can be very complex, involving not only the metric applied to the intermediate testing results for performance interpretation, but also the strategies of exploring states and actions to obtain the initial results. This makes the measurement an \emph{algorithm}, or a \emph{testing algorithm}. Within the context of standard testing, repeatability is thus characterized as the capability to achieve similar testing outcomes across multiple trials or attempts of the same \emph{testing algorithm} against the same subject.

\emph{Reliability}, on the other hand, is the characterization of the testing algorithm's consistent utility across a variety of testing subjects. It partially relies on the algorithm being repeatable in the first place, but it embodies other properties, such as the expectation for the testing efforts to remain bounded and fairness concerns among a diverse set of subject systems being tested.

Note both terms---repeatability and reliability---have been used in different scientific disciplines and research fields, which may not share the same concerns and meanings. This work primarily focuses on the standard testing of robots; hence, these terminologies are specifically adopted and formally defined for this context in Section~\ref{sec:prob}.


From the perspective of standardized testing, which is highly valued by third-party entities such as regulatory and standard organizations, the repeatability and reliability of a testing algorithm is crucial to ensuring the validity, fairness, and feasibility of a testing program. Moreover, these properties must be established as \emph{a priori} knowledge at the design phase before the execution of any testing algorithm. Consequently, they are expected to be \emph{provable} properties rather than empirical demonstrations.

\subsection{Accelerated testing for risk estimate}
In this paper (and many standard testing literature~\cite{zhao2016accelerated,feng2023dense}), ``risk'' is considered as the probability of the occurrence of a specific set of undesirable events (e.g., collision and falling over) subject to a given target distribution.
At a high level, a testing algorithm aimed at risk estimation operates as a ``statistical query'' algorithm~\cite{feldman2017general} that explores states and actions in a Monte-Carlo manner to experimentally determine the failure probability of the testing subject. This approach is frequently seen as cumbersome because it demands a substantial number of samples~\cite{liu2022curse}. This requirement becomes particularly pronounced in safety performance testing, where instances of failure tend to happen with low probability, making it a challenging and sample-intensive process.

Motivated by this difficulty, the concept of importance sampling, along with its various adaptations, has shown promising performance for safety testing of Automated Driving Systems (ADS)~\cite{ding2020learning,ding2021multimodal}, Advanced Driver Assistant Systems (ADAS)~\cite{zhao2016accelerated,zhao2017accelerated}, general software~\cite{gutjahr1997importance}, to name a few.

\begin{figure*}[t]
    \centering
    \begin{subfigure}{0.5\textwidth}
        \includegraphics[width=\textwidth]{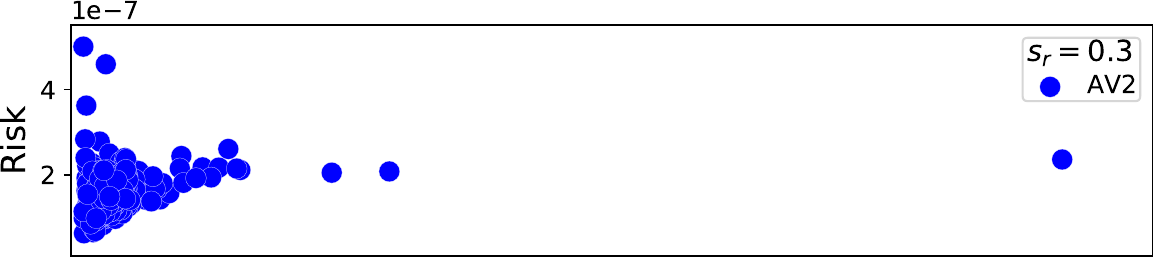}
        \caption{}
        \label{fig:av2_0_3}        
    \end{subfigure}
    \hfill 
    \begin{subfigure}{0.48\textwidth}
        \includegraphics[width=\textwidth]{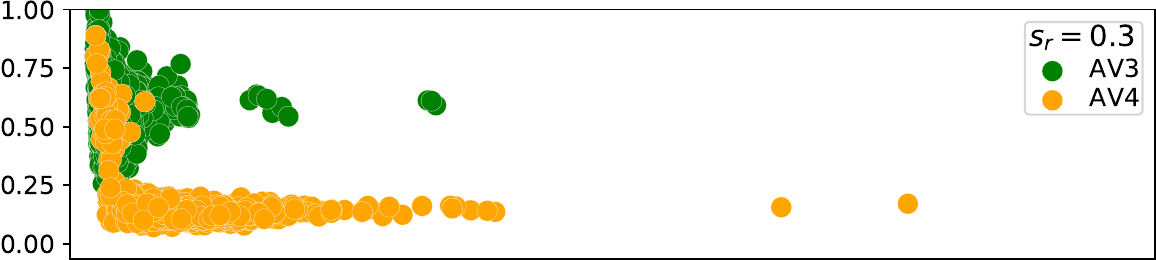}
        \caption{}
        \label{fig:random_0_3}        
    \end{subfigure}
    \newline 
    \begin{subfigure}{0.5\textwidth}
        \includegraphics[width=\textwidth]{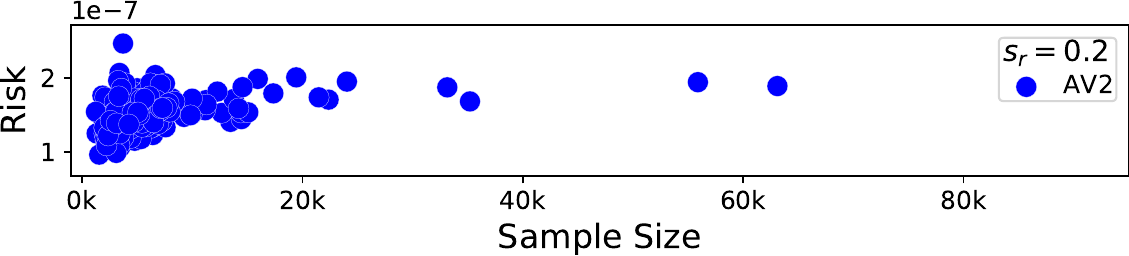}
        \caption{}
        \label{fig:av2_0_2}        
    \end{subfigure}
    \hfill 
    \begin{subfigure}{0.49\textwidth}
        \includegraphics[width=\textwidth]{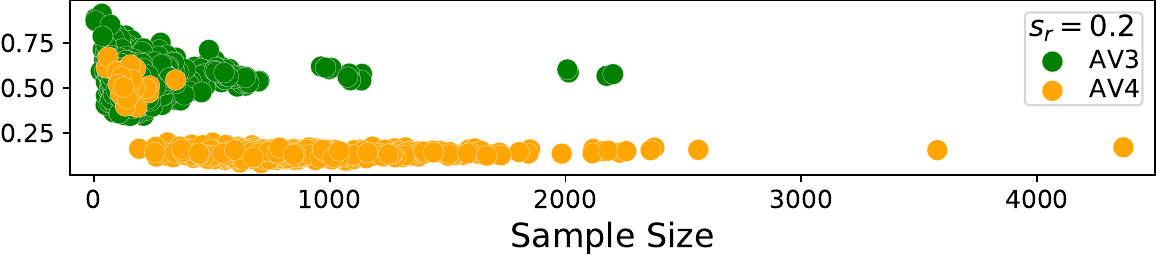}
        \caption{}
        \label{fig:random_0_2}        
    \end{subfigure}  
    \vspace{-3mm}
    \caption{The repeatability and reliability issues of NADE, an importance sampling inspired ADS risk assessment algorithm~\cite{feng2021intelligent}, revealed through an extended re-implementation of the open-source code~\cite{nadegithub}. Overall, three different ADS algorithms are tested, including AV2 (a default subject from the original proposal~\cite{feng2021intelligent}) and two customized algorithms referred to as AV3 and AV4. The hyper-parameter $s_r$ is a threshold value for the relative half-width of the risk estimate, which is closely related to a widely adopted empirical termination condition for importance-sampling based testing algorithms (see Section~\ref{sec:prob} for more details).}
    \label{fig:is_example}
    \vspace{-5 mm}
\end{figure*}

However, for a sampling based testing algorithm---whether oriented towards importance sampling or adhering to a nominal distribution---the challenges of ensuring repeatability and reliability become amplified. This phenomenon is demonstrated in Fig.~\ref{fig:is_example}, where we extend the original implementation~\cite{nadegithub} of an importance sampling inspired ADS testing proposal~\cite{feng2021intelligent} to repeated trials against different subject driving algorithms. The proposal in~\cite{feng2021intelligent} is referred to as Naturalistic and Adversarial Driving Environment (NADE). In Fig.~\ref{fig:av2_0_3}, 100 repeated trials of the default implementation of NADE lead to 100 different risk estimates of AV2, the subject vehicle being tested (original implementation from~\cite{feng2021intelligent}), with significantly different testing efforts highlighted by the range of sample sizes. This lack of repeatability is not confined to a single set of hyper-parameters (e.g., see Fig.~\ref{fig:av2_0_2}) or to AV2 alone. Similar inconsistencies in both testing effort and risk estimation were observed when testing additional vehicle algorithms, referred to as AV3 and AV4 as depicted in Fig.~\ref{fig:random_0_3} and Fig.~\ref{fig:random_0_2}. Note both AV3 and AV4 are customized policies following different behavioral probability distributions of selecting driving actions. They are also dramatically different from (generally worse than) AV2.

Moreover, as the testing subject changes from AV2 to AV3 and AV4, the overall testing effort significantly changes, with the maximum sample size decreasing from nearly 90,000 tests to about 4,000. This means testing different subjects could result in significantly different testing efforts, and the difference is not foreseeable before the execution of the tests following the traditional approach. Such discrepancies, evident in the referenced figures, signal a significant reliability challenge for safety testing algorithms.


\begin{remark}
    This study focuses on importance sampling based testing algorithms. While other testing algorithms for falsification~\cite{lee2019adaptive,capito2020modeled}, verification~\cite{fan2017d,kwiatkowska2007stochastic}, and validation~\cite{koren2019efficient,weng2021towards} may share similar concerns of repeatability and reliability, they are out of the scope of this paper.
\end{remark}

\subsection{Main Contributions}
Inspired by the aforementioned issues, this work seeks to present, to the best of the authors' knowledge, the first formal study on repeatable and reliable accelerated risk assessment algorithms using importance sampling techniques.

Motivated by works on reproducible learning algorithms~\cite{impagliazzo2022reproducibility} and sample size analyses of importance sampling~\cite{chatterjee2018sample}, we propose a risk assessment algorithm (Algorithm~\ref{alg:r2_tight} in Section~\ref{sec:main}) that (i) provides an unbiased risk estimate with high accuracy and high probability, contingent upon a predefined and finite number of samples across a set of testing subjects (reliability), and (ii) obtains the same risk estimate with high probability across multiple testing attempts (repeatability). These properties are substantiated under certain theoretically sound and practically attainable conditions.

It is also important to note that the procedure described in Section~\ref{sec:prob:direct} is repeatable and reliable, and to the best of our knowledge, has not yet been explored in the \emph{testing literature}. However, it directly extends the repeatable statistical query algorithm discussed in~\cite{impagliazzo2022reproducibility}, hence is not considered as a main proposal. More importantly, it has limitations that fail to take advantage of the distribution dependent nature of importance sampling, which will be addressed by the main proposal (i.e., Algorithm~\ref{alg:r2_tight}) presented in Section~\ref{sec:main}. The empirical comparison between the main proposal and this direct extension from~\cite{impagliazzo2022reproducibility} is also presented in Fig.~\ref{fig:effort_pendulum} (Section~\ref{sec:exp}).

The proposed algorithms are initially showcased through a pendulum push-over task, which facilitates ablation studies due to the availability of ground-truth outcomes. This example also directly relates to real-world scenarios, such as push-over disturbance rejection tests in legged robots, as described in~\cite{weng2022safety,weng2022rethink,weng2023towards}. This is illustrated in the second case study, which examines the risk of falling over in a 7-DoF (Degree-of-Freedom) planar bipedal robot Rabbit controlled by model-based~\cite{gong2021one} and learning-oriented locomotion approaches~\cite{castillo2019reinforcement}.

\section{Preliminaries and Problem Formulation}\label{sec:prob}
\textbf{Notation: } The set of real and positive real numbers are denoted by $\R$ and $\R_{>0}$ respectively. $\Z$ denotes the set of all positive integers and $\Z_N=\{1,\ldots,N\}$. $\B$ denotes the set \{0,1\}. The $\ell_{2}$-norm is denoted by $\norm{\cdot}$. If $X$ is a set, $|X|$ is its cardinality. If $x$ is a scalar in $\R$, $|x|=\norm{x}_1$ denotes its absolute value. The sign function is defined as $\text{sgn}(x)=1 \text{ if }x\geq0 \text{ else } 0$. The Kronecker delta function is defined as $\delta_{i,j}=1 \text{ if }i=j \text{ else } \delta_{i,j}=0$.

\subsection{The testing system}
The notion of testing fundamentally admits a control system formulation as 
\begin{equation}\label{eq:ctrl-sys}
    \s(t+1) = f(\s(t), \uu(t); \omega(t)).
\end{equation}
The state $\s \in S \subset \R^n$ includes the characterization of all observable properties of all testing participants and environmental conditions, such as a vehicle's position, velocity, and the weather condition. The testing action $\uu \in U \subset \R^m$ denotes the controllable factors of the testing system. For the inverted pendulum push-over tests (Section~\ref{sec:exp}), $\uu$ could denote the contact point, magnitude and direction of the push-over disturbance. For vehicle collision avoidance tests (e.g., Fig.~\ref{fig:is_example}), $\uu$ typically represents the behaviors of other traffic participants such as a lead vehicle or a crossing pedestrian. Finally, $\omega \in W$ represents the disturbances and uncertainties (e.g., wind speed and road friction). Note $S^{S\times U \times W}$ represents the set of all testing systems (including different testing subjects, such as two vehicles with different makes and models or the same vehicle with different software versions).

The execution of a test always starts from a certain initialization state $\s(0) \in S$, and propagates with a certain scheme that determines the testing action $\uu(t)$. Furthermore, the testing action can be propagated via two distinct methods, namely the open-loop and the feedback approaches. The open-loop approach presents a finite sequence of $\xi$ actions $\bar{\uu}=\{\uu(t)\}_{t\in\Z_{\xi}}$, whereas the feedback approach determines the testing action based on instantaneous state feedback as $\uu(t)=\pi(\s(t))$, where $\pi$ is the policy mapping states to actions. 

\subsection{Sampling-based risk estimate}
A testing algorithm for risk estimation $\mathcal{TE}$ establishes a strategy that explores samples of testing initialization states $\s_0$ and testing actions $\bar{\uu}$, collects trajectories of states $T \subset S^{\xi}$ per the testing system dynamics \eqref{eq:ctrl-sys}, analyzes the risky outcome (e.g., collision or non-collision) with $\mathcal{C} \subset \R^n$ representing the set of unsafe states, and terminates based on a specific criterion $\mathcal{T}: S^{\Z \times \xi} \rightarrow \B$ (that takes all states collected from all tests as the input). In this paper (and the general practice of risk assessment testing algorithms), this termination condition is triggered either by reaching a predefined number of collected trajectories or by other indicators of the convergence of the risk estimate.

In the open-loop testing configuration, the state initialization follows a certain probability distribution $p_s$ over the sample space $S$. Similarly, the open-loop actions are determined by the distribution $p_u$ over the action space $U^{\xi}$. Since the samples of states and actions are independent, one can denote $p(x) = p(\s_0, \bar{\uu}) = p_s(\s_0)\cdot p_u(\bar{\uu})$ for some $\s_0$ and $\bar{\uu}$. Consequently, the feedback testing policy represents a particular case of this general characterization, integrating the testing system dynamics as described in~\eqref{eq:ctrl-sys}, leading to $p(x) = p_s(\s_0)$. The above steps are formally described in Algorithm~\ref{alg:testing}.

\begin{algorithm}
    \begin{algorithmic}[1]
    \State {\bf Given: $X$, $p$, $\xi$, $\mathcal{T}$, $\mathcal{C}, f$}
    \State {\bf Initialize: $i=1, T=\emptyset$ }
    \State {{\bf While} $\mathcal{T}(T)=0$:}
    \State {\ \ \ \ $(\s_0, \bar{\uu})_i = x_i \sim p(X)$}
    \State {\ \ \ \ Collect state trajectory $T_i$ through $f$~\eqref{eq:ctrl-sys}}
    \State {\ \ \ \ $r = \Big(\sum_{j=1}^i(1-\delta_{T_i\cap \mathcal{C}, \emptyset})\cdot p_s(x_j)\Big)/i$}
    \State {\ \ \ \ $T.\texttt{append}(T_i)$, $i$+=$1$}
    \State {{\bf Output}: $r_i$}
    \end{algorithmic}
    \caption{$\mathcal{TE}(X, p, \xi, \mathcal{T}, \mathcal{C}, f)$} \label{alg:testing}
\end{algorithm}

The distribution $p$ over the sample space $X = S \times U^{\xi}$ is commonly referred to as the \emph{nominal} distribution (or target distribution). This distribution represents the probability of disturbances, uncertainties, and controllable changes within the context for which the risk estimate is sought. In the vehicle testing context, it is often characterized as a naturalistic driving model learned from large-scale real-world human driver data~\cite{feng2021intelligent,feng2023dense}.

\begin{algorithm}
    \begin{algorithmic}[1]
    \State {\bf Given: $X$, $p$, $q$, $\xi$, $\mathcal{T}$, $\mathcal{C}, f$}
    \State {\bf Initialize: $i=1, T=\emptyset$ }
    \State {{\bf While} $\mathcal{T}(T)=0$:}
    \State {\ \ \ \ $(\s_0, \bar{\uu})_i = x_i \sim q(X)$}
    \State {\ \ \ \ Collect state trajectory $T_i$ through $f$~\eqref{eq:ctrl-sys}}
    \State {\ \ \ \ $r = \Big(\sum_{j=1}^i(1-\delta_{T_j\cap \mathcal{C}, \emptyset})\cdot p(x_j)/q(x_j) \Big)/i$}
    \State {\ \ \ \ $T.\texttt{append}(T_i)$, $i$+=$1$}
    \State {{\bf Output}: $r_i$}
    \end{algorithmic}
    \caption{$\mathcal{TE}_{IS}(X, p, q, \xi, \mathcal{T}, \mathcal{C}, f)$} \label{alg:is_testing}
\end{algorithm}

Note that achieving a risk estimate via Algorithm~\ref{alg:testing} in a Monte-Carlo manner often requires a significant amount of tests. This demand is largely due to the low probability of having $T_i\cap \mathcal{C} \neq \emptyset$ (i.e., a failure test), a phenomenon variously described as long-tail event, corner case, rare event, and curse-of-rarity~\cite{liu2022curse}. To handle these low-probability events, many testing algorithms employ importance-sampling, adopting a structure similar to that presented in Algorithm~\ref{alg:is_testing}.

The distribution $q$ is referred to as the \emph{importance} distribution, specifically designed to bias the exploration towards state-action pairs associated with higher risk levels. Considering that the underlying ground-truth collision rate is $r^*$, it has been shown theoretically and practically that both algorithms can provide an unbiased estimate of $r^*$. That is, suppose the termination condition requires a sufficiently large number of $n$ testing trajectories to be collected, i.e. with $\mathcal{T}(T) = \delta_{\text{sgn}(|T|-n),1}$, $\lim_{n \rightarrow \infty} |\mathcal{TE}(\cdot)-r^*| = 0$ and $\lim_{n \rightarrow \infty} |\mathcal{TE}_{IS}(\cdot)-r^*| = 0$.

In theory, the minimum number of tests required for the above two algorithms depends on a few given parameters. Among those that are generally configurable (e.g., confidence level and desired accuracy), the variance of the risk estimates fundamentally requires the analytical knowledge of $f$, which is not attainable in real-world applications~\cite{wasserman2013all,ross2017introductory}. In practice, the termination condition $\mathcal{T}$ is commonly implemented as a ``runtime'' empirical observer of the relative half-width (RHW) of the estimations w.r.t. a predetermined threshold $s_r \in \R _{>0}$ supplied with a certain confidence interval~\cite{zhao2017accelerated,feng2020testing,feng2021intelligent,feng2023dense}. For a sufficiently large $n$ or for a sufficiently small RHW threshold $s_r$, both algorithms are expected to ``converge'' to the $r^*$ asymptotically. Note \emph{this only guarantees the probabilistic accuracy of the testing algorithm rather than the repeatability of the testing algorithm}. That is, one has an estimate that is statistically close to $r^*$, but one does not have the \emph{same} estimate with high probability among multiple attempts of the same algorithm with the same $f$ and other hyper-parameters. As a result, they are not $\beta$-repeatable per Definition~\ref{def:beta-r} to be presented in the next section.

Moreover, the variance-based criterion and RHW form only a necessary condition for good performance. The threshold $s_r$ is typically fixed for all testing attempts and testing subjects. When RHW is used as the basis of a runtime termination observer, one could end up with multiple different risk estimates, all satisfying the given threshold (i.e. the repeatability problem). More importantly, these different estimates could come from substantially different testing efforts. The number of tests required could also fluctuate widely as one tests different functions $f \in S^{S\times U \times W}$. This variability introduces further complications, as there is no clear way to anticipate the extent of testing efforts or their variability across different test subjects. As a result, the previously mentioned two algorithms fail to meet the criteria to be $\gamma$-reliable per Definition~\ref{def:gamma-r} to be presented in the next section.




\subsection{Repeatable \& reliable risk estimate}
To formally address the repeatability and reliability issues of sampling-based risk assessment testing algorithms, the following definitions of $\beta$-Repeatability and $\gamma$-Reliability are presented with respect to the conceptual discussions mentioned above. Definition~\ref{def:beta-r} is adapted from the literature of reproducible learning algorithms~\cite{impagliazzo2022reproducibility}.
\begin{definition}\label{def:beta-r} [\textbf{$\beta$-Repeatability}]
    A sampling-based risk-estimate testing algorithm $\mathcal{TE}$ (e.g. Algorithm~\ref{alg:testing} and Algorithm~\ref{alg:is_testing}) is $\beta$-repeatable if for some $\beta \in (0,1)$ and any two executions of the same algorithm with all other configurations following the same level of randomness, $\mathcal{TE}^i$, $\mathcal{TE}^j$ and $i\neq j$,    
    \begin{equation}
        \mathbb{P}(\mathcal{TE}^i(\cdot) = \mathcal{TE}^j(\cdot)) \geq 1-\beta.  
    \end{equation}
\end{definition}
While a $\beta$-repeatable algorithm is often defined for a fixed testing subject revealed through the specifically given $f$, the reliability property extends the study to broader range of subject systems.
\begin{definition}\label{def:gamma-r}[\textbf{$\gamma$-Reliability}]
    Let $T_{jk}$ denote the set of state trajectories collected from the $j$-th execution of a testing algorithm, $\mathcal{TE}^j$, with a certain testing system $f_k \in S^{S \times U \times W}$. A sampling-based risk-estimate testing algorithm $\mathcal{TE}$ (e.g. Algorithm~\ref{alg:testing} and Algorithm~\ref{alg:is_testing}) is $\gamma$-reliable if, with all other configurations following the same level of randomness, it is $\beta$-repeatable for some $\beta \in (0,1)$, and there exists $\gamma \in \Z$, $\mathcal{F} \subset S^{S \times U \times W}$, and
    \begin{equation}
        \forall f_k \in \mathcal{F}, \forall j \in \Z, |T_{jk}| \leq \gamma
    \end{equation}
\end{definition}
Note that the $\beta$-repeatability is a prerequisite of the $\gamma$-reliability. However, while $\beta$-repeatability mostly defines a lower bound on the expected testing efforts to guarantee the desired properties (as will be shown in Theorem~\ref{thm:r2_tight}), $\gamma$-reliability conversely sets an upper bound on the testing efforts for all possible testing subjects. 


\subsection{A direct adaptation of reproducible statistical query}\label{sec:prob:direct}
This section concludes with a direct adaptation of the reproducible statistical query~\cite{impagliazzo2022reproducibility} to make a $\beta$-repeatable and $\gamma$-reliable risk assessment algorithm. Though, to the best of our knowledge, the specific implementation of this approach has not been found in the literature of testing, the adaptation is straightforward from ``reproducible learning''~\cite{impagliazzo2022reproducibility}, and the principles revealed by this adaptation also support the main proposal, as discussed in Section~\ref{sec:main}. 

Consider a risk estimation algorithm (e.g., Algorithm~\ref{alg:testing} and Algorithm~\ref{alg:is_testing}) with the estimated risk $r \in [0,1]$ as the output. The key idea inspired by~\cite{impagliazzo2022reproducibility} is to \emph{split} the output set of risk estimate, $[0,1]$, into multiple intervals of $\alpha$ resolution. Instead of directly returning the original risk estimate as the output, the algorithm returns the median of each interval that contains the original estimate. Naturally, the selection of $\alpha$ significantly influences the output. To mitigate this sensitivity, the proposed algorithm statistically determines the resolution $\alpha_0$ for the first interval, $[0, \alpha_0]$, by uniformly sampling $\alpha_0$ from the range $[0, \alpha]$. The rest of the intervals are unchanged. This modification lays the groundwork for a probabilistically ensured repeatability property.

Specifically, applying the above procedure to the output of Algorithm~\ref{alg:testing} or Algorithm~\ref{alg:is_testing}, with probability $1-\beta$, it produces consistent risk estimates (repeatability), and with probability $1-\epsilon$ ($\epsilon \in (0,1)$), that estimation error will be confined within $\tau > 0$ in the $\ell_1$-norm. Moreover, the above properties are sufficiently guaranteed with a total of $\frac{4 \log{(2/\epsilon)}}{2\tau^2(\beta-2\epsilon)^2}$ (by Theorem 2.3 in~\cite{impagliazzo2022reproducibility}) sampled tests (reliability). 

However, note the properties are ensured \emph{without any dependencies upon the sampling distributions $p$ and $q$}. This independence from the sampling distributions is a fundamental property embedded in the original reproducible learning proposal~\cite{impagliazzo2022reproducibility}. As a result, different sampling distributions do not affect performance outcomes. This insight serves as the foundation for the development of Algorithm~\ref{alg:r2_tight} in the following section.

\section{Main Proposal}\label{sec:main}
This section presents a provably repeatable and reliable risk assessment testing algorithms specifically for the class of importance-sampling based accelerated risk assessment algorithms. The \emph{repeatability} property draws techniques revealed in Section~\ref{sec:prob:direct} adapted from~\cite{impagliazzo2022reproducibility}. The \emph{reliability} property leverages the analysis of sample sizes in importance sampling~\cite{chatterjee2018sample}. Combining them together, one overlays the desired repeatability and reliability properties on the already achieved accuracy and efficiency of a typical importance-sampling based testing algorithm.

\begin{algorithm}
    \begin{algorithmic}[1]
    \State {\bf Given: $X$, $\xi$, $\mathcal{C}, f, \beta, \tau, c_r$, $p$, $q$, $\bar{r}$}
    \State {$\alpha= \frac{2\tau}{\beta+1}$}
    \State {$\alpha_0 \sim U([0, \alpha])$}
    \State {c=0}
    \State {{\bf While} $ \bar{r}  e^{-\frac{c}{4}} + 2\bar{r}\sqrt{\mathbb{P}\Big(\log \frac{p(x)}{q(x)} > D(p \mid\mid q) + \frac{c}{2}\Big)}  > \frac{\beta\tau}{\beta+1}$:}
    \State {\ \ \ \ $c=c+c_r$}
    \State {Let $n=e^{D(p \mid\mid q)+c}$ and $\mathcal{T}(T) = \delta_{\text{sgn}(|T|-n),1}$}
    \State {$r = \mathcal{TE}_{IS}(X, p, q, \xi, \mathcal{T}, \mathcal{C}, f)$}
    \State {{\bf If} $r \leq \alpha_0$}
    \State {\ \ \ \ {\bf Output}: $\frac{\alpha_0}{2}$}
    \State {{\bf Else}:}
    \State {\ \ \ \ {\bf Output}: $\frac{\alpha}{2}+ \alpha \cdot \texttt{round}((r-\alpha_0)/\alpha)$}
    \end{algorithmic}
    \caption{$\overline{\mathcal{TE}}_{IS}$($X$, $\xi$, $\mathcal{C}, f, \beta, \tau, c_r$, $p$, $q$, $\bar{r}$)} \label{alg:r2_tight}
\end{algorithm}

\subsection{The main algorithm, theorem, and proof}
The main algorithm is given as Algorithm~\ref{alg:r2_tight}, which holds the repeatability and reliability property revealed through Theorem~\ref{thm:r2_tight}.

\begin{theorem}\label{thm:r2_tight}
    Given $\bar{\tau}\in (0,1]$, Algorithm~\ref{alg:r2_tight} is $\beta$-repeatable for $\beta \in (0,1)$ satisfying $|\widehat{\mathcal{TE}}_{IS}(\cdot)-r^*|\leq \tau\in \R_{>0}$ except with probability $2e^{-2n\tau^2}, n=e^{D(p \mid\mid q)+c}$ for some $c\in\R_{\geq0}$. 
    Moreover, for all importance distribution $q$ satisfying
    \begin{equation}\label{eq:r2_tight_ieq}
        \bar{r} \Bigg(e^{-\frac{c}{4}} + 2\sqrt{\mathbb{P}\Big(\log \frac{p(x)}{q(x)} > D(p \mid\mid q) + \frac{c}{2}\Big)} \Bigg) \leq \frac{\beta\tau}{\beta+1},
    \end{equation}
    and some $c \in [0, -D(p \mid\mid q) + \log\gamma]$, it is also $\gamma$-reliable.
\end{theorem}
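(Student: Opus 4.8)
The plan is to read Algorithm~\ref{alg:r2_tight} as a randomized rounding of the raw importance-sampling estimate $r=\mathcal{TE}_{IS}(\cdot)$ (line~8), and to establish the two properties in turn, with the repeatability argument feeding the reliability one as Definition~\ref{def:gamma-r} demands. For $\beta$-repeatability I would follow the reproducible-statistical-query template of Section~\ref{sec:prob:direct} (after~\cite{impagliazzo2022reproducibility}): two executions $\overline{\mathcal{TE}}_{IS}^i,\overline{\mathcal{TE}}_{IS}^j$ share the same realization of the internal offset $\alpha_0\sim U([0,\alpha])$ but draw independent samples, yielding raw estimates $r^i,r^j$. The rounding map of lines~9--12 has jump points forming an arithmetic grid of spacing $\alpha$ with uniformly random phase $\alpha_0$, so conditioned on $(r^i,r^j)$ the two outputs disagree only if a grid point separates them, an event of probability $\min(|r^i-r^j|/\alpha,1)\le |r^i-r^j|/\alpha$. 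Taking expectation over the data and using the triangle inequality gives $\mathbb{P}(\overline{\mathcal{TE}}_{IS}^i(\cdot)\neq\overline{\mathcal{TE}}_{IS}^j(\cdot))\le 2\,\mathbb{E}_q|r-r^*|/\alpha$. The crucial step is to control $\mathbb{E}_q|r-r^*|$ through the importance-sampling sample-size bound of~\cite{chatterjee2018sample}: with $n=e^{D(p\mid\mid q)+c}$ samples and the failure indicator bounded by $\bar{r}$, one has $\mathbb{E}_q|r-r^*|\le \bar{r}\big(e^{-c/4}+2\sqrt{\mathbb{P}(\log\frac{p(x)}{q(x)}>D(p\mid\mid q)+c/2)}\big)$. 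The while-loop of lines~5--6 runs exactly until this right-hand side drops to $\frac{\beta\tau}{\beta+1}$, i.e. until~\eqref{eq:r2_tight_ieq} holds; substituting $\alpha=\frac{2\tau}{\beta+1}$ then yields $\mathbb{P}(\text{disagree})\le 2\cdot\frac{\beta\tau}{\beta+1}\big/\frac{2\tau}{\beta+1}=\beta$, which is Definition~\ref{def:beta-r}. The accuracy clause concerns the raw estimate $\widehat{\mathcal{TE}}_{IS}=r$ directly: a Hoeffding bound on the (bounded) per-sample contributions gives $|\widehat{\mathcal{TE}}_{IS}(\cdot)-r^*|\le\tau$ except with probability $2e^{-2n\tau^2}$ for $n=e^{D(p\mid\mid q)+c}$.

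For $\gamma$-reliability, the key observation is that the while-loop in lines~5--6 is \emph{deterministic}: its guard is an analytic inequality in $c$ involving only $p$, $q$, $\bar{r}$ and the tail probability of the log-likelihood ratio, with no dependence on the sampled trajectories. Hence the selected $c$, and therefore the sample size $n=e^{D(p\mid\mid q)+c}$, is fixed before any test is run and is identical across executions, so $|T_{jk}|=n$ for every $j$. Because the guard $g(c)=\bar{r}\big(e^{-c/4}+2\sqrt{\mathbb{P}(\log\frac{p(x)}{q(x)}>D(p\mid\mid q)+c/2)}\big)$ is monotonically decreasing in $c$, the hypothesis that~\eqref{eq:r2_tight_ieq} is satisfiable for some $c\in[0,\,-D(p\mid\mid q)+\log\gamma]$ guarantees termination at a $c^\star$ no larger than that bound, whence $n=e^{D(p\mid\mid q)+c^\star}\le e^{D(p\mid\mid q)+(-D(p\mid\mid q)+\log\gamma)}=\gamma$. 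Defining $\mathcal{F}$ as the family of testing subjects $f_k$ (through their induced risk bound $\bar{r}$ and the chosen importance distribution $q$) for which~\eqref{eq:r2_tight_ieq} admits such a solution then gives $|T_{jk}|\le\gamma$ uniformly over $f_k\in\mathcal{F}$ and all $j$; combined with the $\beta$-repeatability above, this is exactly Definition~\ref{def:gamma-r}.

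I expect the main obstacle to be the faithful instantiation of the Chatterjee--Diaconis bound in the testing setting---specifically, verifying that the failure-indicator functional together with the substitution $n=e^{D(p\mid\mid q)+c}$ reproduces precisely the expression in~\eqref{eq:r2_tight_ieq}, and reconciling the two modes of concentration (the expected-$\ell_1$ bound that drives repeatability through the rounding argument versus the Hoeffding high-probability bound used for the accuracy clause, which tacitly requires boundedness of the per-sample weights). A secondary technical point is the discretization of $c$ by the step $c_r$: I would need to confirm that overshooting the exact root of $g(c)=\frac{\beta\tau}{\beta+1}$ by at most one increment $c_r$ still leaves $c^\star$ inside the budget $[0,\,-D(p\mid\mid q)+\log\gamma]$, or else absorb that slack into the definition of $\mathcal{F}$.
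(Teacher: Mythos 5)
Your proposal is correct and follows essentially the same route as the paper's proof: the Chatterjee--Diaconis bound on $\mathbb{E}|r-r^*|$ with $n=e^{D(p\mid\mid q)+c}$, the choice $\tau'=\frac{\beta\tau}{\beta+1}$, $\alpha=\frac{2\tau}{\beta+1}$, Hoeffding for the accuracy clause, and the deterministic bound $c\leq -D(p\mid\mid q)+\log\gamma$ giving $n\leq\gamma$ for reliability. In fact you make explicit the one step the paper leaves implicit (deferring to the reproducible-SQ argument of~\cite{impagliazzo2022reproducibility}), namely that the randomly-phased $\alpha$-grid yields disagreement probability at most $2\tau'/\alpha=\beta$.
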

\begin{proof}
    Let $\phi_f: X \rightarrow \B$ be the failure check function (which composes the testing system dynamics \eqref{eq:ctrl-sys}, state trajectory collection, and the intersection check against $\mathcal{C}$) with a boolean output (w.l.o.g., let $\phi_f(x) \in \{0,1\}, \forall x$). We have $r^* = \mathbb{E}_p \phi(x)$ and the risk estimate through Algorithm~\ref{alg:is_testing} as $r=\frac{1}{n}\sum_{i=1}^n \phi_f(x_i)p(x_i)/q(x_i)$. Let $L = D(p \mid\mid q)$. Then, for $n=e^{L+t}$ with some $t \in \R_{\geq0}$,
    \begin{equation}\label{eq:r2_tight_error_bound}
        \begin{aligned}
            & \mathbb{E}|r - r^*| \\
            & \leq \sqrt{\mathbb{E}(\phi_f(x)^2)} \Bigg(e^{-\frac{c}{4}} + 2\sqrt{\mathbb{P}\Big(\log \frac{p(x)}{q(x)} > L + \frac{c}{2}\Big)} \Bigg) \\
            & \leq \bar{r} \Bigg(e^{-\frac{c}{4}} + 2\sqrt{\mathbb{P}\Big(\log \frac{p(x)}{q(x)} > L + \frac{c}{2}\Big)} \Bigg) \leq \tau'.
        \end{aligned} 
    \end{equation}
    The first inequality is due to Theorem 1.1 in \cite{chatterjee2018sample}. The second inequality is derived as $r \in [0,1]$ and $\phi_f(x) \in \{0,1\}$, hence $\mathbb{E}(\phi_f(x)^2) = \mathbb{E}(\phi_f(x)) \leq \bar{r} = \max_{f \in \mathcal{F}} \sum_{x\in X} \phi_f(x)p(x).$
    
    Let $\tau' = \frac{\beta\tau}{\beta+1}$ and $\alpha= \frac{2\tau}{\beta+1}$. Algorithm~\ref{alg:r2_tight} offsets Algorithm~\ref{alg:is_testing}'s output by at most $\alpha/2$, as a result, $\mathbb{E}|r - r^*| \leq \tau' + \frac{\alpha}{2} = \frac{\tau+\beta\tau}{\beta+1} = \tau.$
    
    With $|r - r^*|\leq 1$, and by Hoeffding's inequality, we further have $\mathbb{P}(|r - r^*| \geq \tau) \leq 2e^{-2n\tau^2}$. This proves the $\beta$-repeatability of estimation accuracy $\tau$ except with probability $2e^{-2n\tau^2}$.
    Furthermore, for $\gamma$-reliability to be satisfied, the number of samples $n$ needs to be upper bounded by $\gamma$. It is thus expected that the importance distribution $q$ satisfying
    \begin{equation}
        \bar{r} \cdot \Bigg(e^{-\frac{c}{4}} + 2\sqrt{\mathbb{P}\Big(\log \frac{p(x)}{q(x)} > L + \frac{c}{2}\Big)} \Bigg) \leq \tau' = \frac{\beta\tau}{\beta+1},
    \end{equation}
    and $\gamma \geq e^{L + c} = e^{D(p \mid\mid q) + c} \Rightarrow c \leq -D(p \mid\mid q) + \log\gamma$
    for some $t \in \R_{\geq0}$.
    
    This completes the proof.
\end{proof}

\subsection{Discussions}
Note in Algorithm~\ref{alg:r2_tight}, if $c_r$ $p$, $q$, $\bar{r}$ in line 1 are replaced with the confidence coefficient $\epsilon \in (0,1)$, if $\alpha= \frac{2\tau}{\beta+1-2\epsilon}$ in line 2, and if line 4-7 are replaced with $n = \frac{4 \log{(2/\epsilon)}}{2\tau^2(\beta-2\epsilon)^2}$, the algorithm becomes the exact procedure described in Section~\ref{sec:prob:direct}. These  differences enable the transfer from a distribution agnostic approach to a distribution dependent one, tailing an importance sampling based accelerated risk assessment algorithm with repeatable and reliable efforts.

Specifically, $\bar{r} \in [0,1]$ is the maximum risk estimate among a certain set of possible testing subject systems $\mathcal{F} \subset S^{S \times U \times W}$, such as the worst-case risk estimate for all vehicles joining a testing program. Note that having a general risk estimate testing algorithm that is efficiently reliable for all $f \in S^{S \times U \times W}$ is fundamentally \emph{impossible}, as demonstrated in~\cite{weng2022rethink}. The estimate of $\bar{r}$ is necessary to confine the tests to a subset of $\subset S^{S \times U \times W}$ and is readily obtainable in the testing practice. It may also be conservatively over-estimated to err on the side of caution, albeit at a potential cost to performance, as will be shown later. 

Moreover, the nominal and importance distributions are $p$ and $q$, respectively. Note in the general statistics applications, $p$ and $q$ may not be known properties~\cite{chatterjee2018sample}, hence the presented results may not be as useful or applicable. However, in the context of testing, $p$ is often expected to be provided as part of the specifications made by standard organizations and regulatory authorities, while $q$ is required for testing execution by operators and testing agencies. 

\begin{remark}\label{rmk:not-matter}
Not every importance sampling based testing algorithm is an accelerated testing algorithm. The accelerated performance largely relies on the properly chosen importance distribution $q$, which is out of the scope of this paper~\cite{zhao2016accelerated,zhao2017accelerated,feng2020adaptive,feng2021intelligent,feng2023dense}. The results in this section 
are applicable for an arbitrary pair of target and importance distributions $(p,q)$.  
\end{remark}

The repeatability of Algorithm~\ref{alg:r2_tight} is achieved through a trade-off with accuracy as mentioned in Section~\ref{sec:prob:direct}. Line 9-12 specifies the return as the medium value of the interval with range $\alpha$ that contains the original risk estimate. In practice, for any established $\alpha$ (based on the desired accuracy and probability of repeatability) among stakeholders, $\alpha_0$ can be sampled by the testing initiator (e.g., the regulatory authority or a standard organization) as a ``secret'' code shared among testing operators to ensure the desired $\beta$-repeatable property. 

The reliability of Algorithm~\ref{alg:r2_tight}, revealed by Theorem~\ref{thm:r2_tight}, is jointly determined by the Kullback–Leibler (KL) divergence between the two distributions and the value of $c$ which is mainly associated with the typical order of fluctuations of $\log(p(X)/q(X))$ from its expected value. In practice, the KL-divergence between $p$ and $q$ is immediate given the above, and one can numerically approximate the minimum estimate of $c$ that satisfies~\eqref{eq:r2_tight_ieq} (e.g., an incremental trial-and-error search of $c$ values from zero as demonstrated in line 5-6 in Algorithm~\ref{alg:r2_tight} or some other bisection alternatives). Note $c_r$ is preferably a small value to enhance the efficiency, but the search with large steps will not undermine the theoretical properties dictated from Theorem~\ref{thm:r2_tight}. These two values (order of fluctuations and KL-divergence) critically influence the necessary testing efforts to minimize the risk estimation error (subject to the $\ell_1$-norm bound of $\tau$) with a high probability of $1-2e^{-2n\tau^2}$, thereby insuring $\gamma$-reliability (and $\beta$-repeatability). 

Depending on the nature of $f$ (revealed through $\bar{\tau}$), $p$, and $q$, the desired testing efforts and the level of assurance could vary significantly. However, in contrast to the conventional approach using RHW for test termination, the testing effort for any subject system can be precisely predetermined and adjusted prior to any testing, enabling tests to be executed with reliably repeatable outcomes.

\section{Experiments}\label{sec:exp}
The proposed algorithm is widely applicable across diverse problem sets, given knowledge of $p$, $q$ and $\bar{\tau}$. While it can address the issues depicted in Fig.~\ref{fig:is_example} based on Feng et al.~\cite{feng2021intelligent}, the distribution of $p$ utilized in that study was not explicitly revealed to the public~\cite{nadegithub}. This work considers two examples of applying push-over disturbances on a controlled inverted pendulum and a 7-DoF planar bipedal robot, Rabbit, as illustrated in Fig.~\ref{fig:demos}. The examples provided exhibit dynamic and stochastic responses similar to those of various humanoid and legged robot systems. This similarity presents numerous challenges for standard testing algorithms, as discussed in the existing literature~\cite{weng2022safety,weng2022rethink,weng2023towards}. One can also refer to the open-source code\footnote{\href{https://tinyurl.com/rp-rl-risk-est}{https://tinyurl.com/rp-rl-risk-est}} for a synthesized non-robotics related example.

\begin{remark}
    The primary focus of this work is on testing algorithms, specifically their repeatability and reliability. The subjects being tested, while necessary, are not the central interest. The controllers for these subjects are adaptations from the works in \cite{varghese2017optimal,gong2021one,castillo2023template}; and therefore, detailed descriptions are omitted.
\end{remark}

\begin{figure}
    \centering
    \includegraphics[trim={2cm 0cm 3cm 0}, clip, width=0.35\textwidth,angle=270]{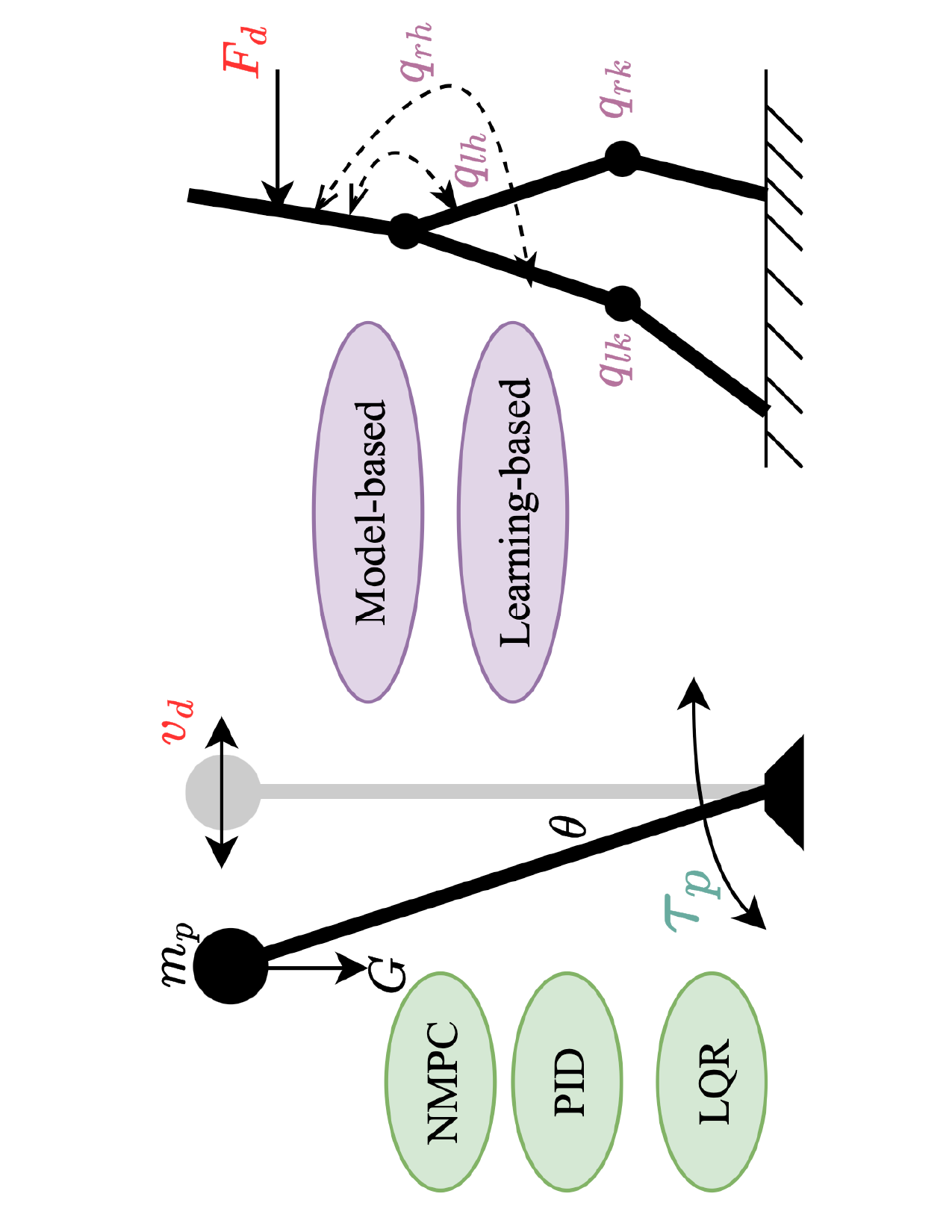}
    \caption{The inverted pendulum push-over task (left) and the legged robot pushover task with Rabbit~\cite{castillo2019reinforcement,gong2021one} (right). Three controllers are tested for the inverted pendulum push-over task. Two locomotion controllers are evaluated for the Rabbit case.}
    \label{fig:demos}
    \vspace{-5mm}
\end{figure}

\begin{figure*}[t]
    \centering
    \includegraphics[trim={1.5cm 6.5cm 1.5cm 0.8cm}, clip, width=0.99\textwidth]{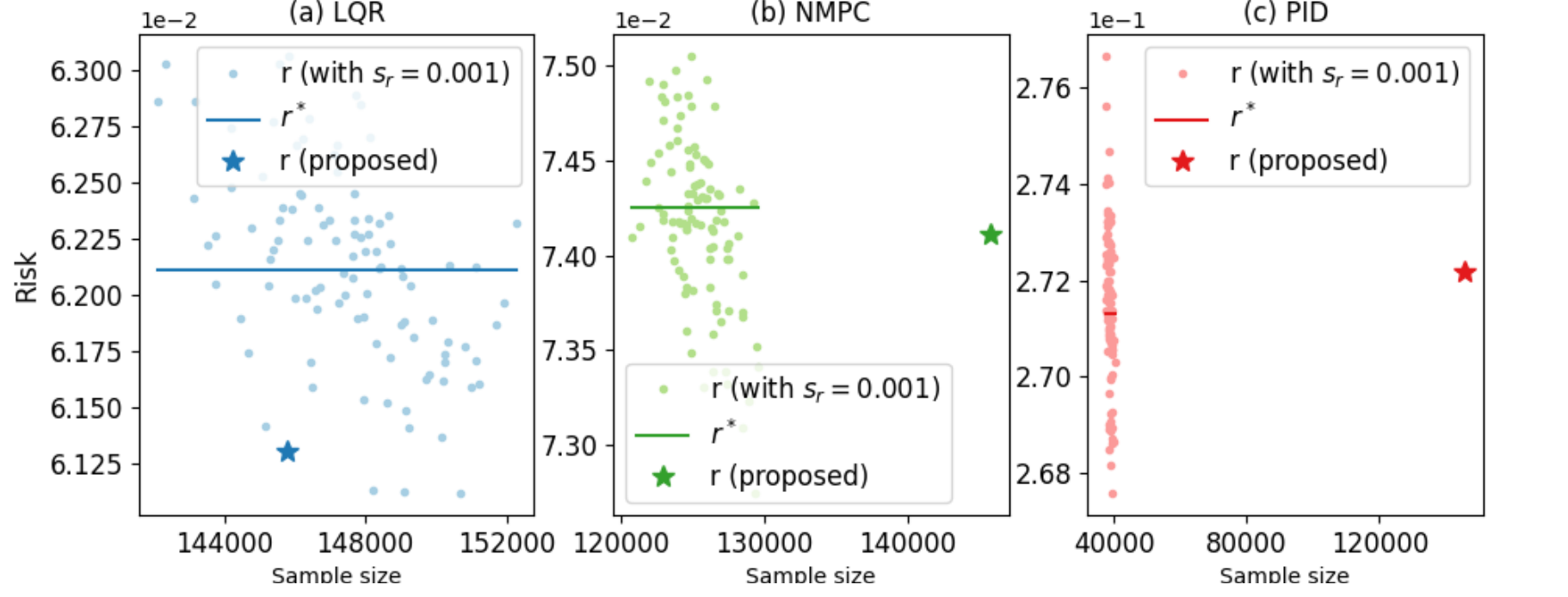}
    \vspace{-2mm}
    \caption{Repeatability and reliability comparison between Algorithm~\ref{alg:r2_tight} and Algorithm~\ref{alg:is_testing} with RHW-based termination condition through the inverted-pendulum push-over task against 3 different controllers (LQR, NMPC, and PID). Within each sub-figure, Algorithm~\ref{alg:r2_tight} is configured using identical specifications with $\beta=0.4, \bar{t}=0.3, \tau=0.1$, which leads to 145800 samples and a failure probability ($\epsilon$) that is extremely close to zero. It is repeated for 100 times against each controller. The obtained risk estimates are identical shown as the star in each sub-figure. The dots in all figures represent another 100 attempts of Algorithm~\ref{alg:is_testing} with the RHW threshold of $s_r=0.001$ as the termination criterion. The $r^*$ for each controller is obtained through the approximated enumeration of all samples in the discretized sample space $V_d$ with sufficiently small resolution ($0.002$ m/s).}
    \label{fig:utilit_pendulum}
    \vspace{-5mm}
\end{figure*}

\begin{figure}[h]
    \centering
    \includegraphics[trim={4cm 1 0cm 0}, clip,width=0.95\textwidth]{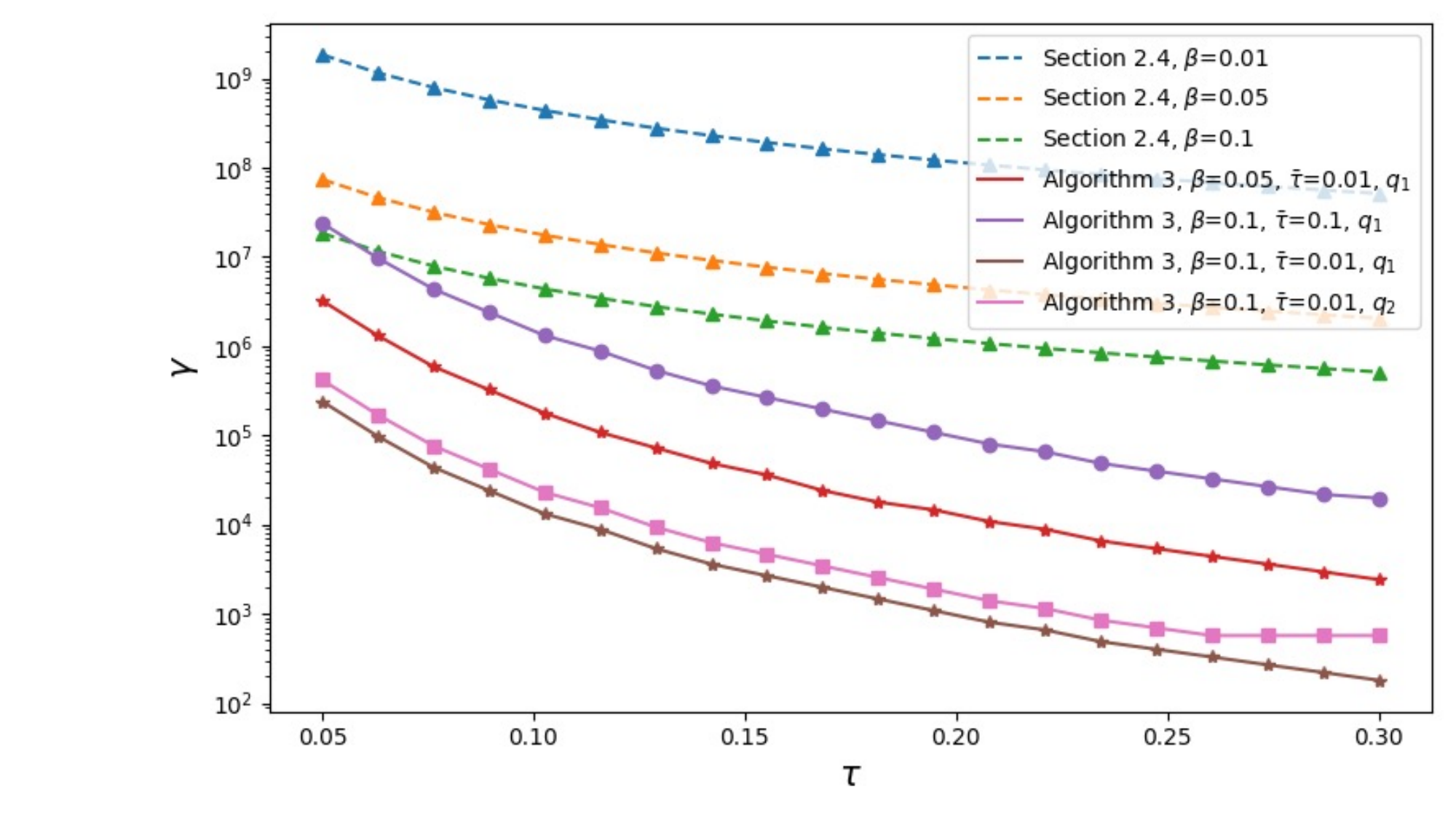}
    \caption{Comparing the main proposal (Algorithm~\ref{alg:r2_tight}) and the described procedure in Section~\ref{sec:prob:direct} w.r.t. a variety of hyper-parameters. Note the $\gamma$ values are shown in the $\log$ scale.}
    \vspace{-5 mm}
    \label{fig:effort_pendulum}
\end{figure}

\subsection{The inverted pendulum pushover} 
\textbf{Testing Configuration}: This standard case in the dynamics and control literature involves a vertically placed \emph{pendulum} (a one-meter-long rod with a 1 kg mass at one end and an actuated pivot at the other). A certain mechanism applies torque ($\tau_p \in [-1,1]$ N$\cdot$m) to the pivot point to control the pendulum's angle (subject to constraints $|\dot{\tau}_p| \leq 10$ N$\cdot$m/s). The coefficient of friction is $0.5$ and the gravity is $9.817$ m/s$^2$. Three different controllers are implemented: (i) a Proportional-Integral-Derivative (PID), (ii) a Linear Quadratic Regulator (LQR), and (iii) a nonlinear Model Predictive Controller (NMPC)~\cite{varghese2017optimal}. 
Each controller is able to maintain the pendulum in an upright position at equilibrium ($\theta=\dot{\theta}=0$) within certain disturbance levels of the initial velocity $v_d$. However, due to the system's nonlinear nature and the specific constraints on $\tau_p$, none of the controllers can achieve a ``swing-up'' from a fully inverted position to equilibrium. As a result, with a sufficiently large $v_d$ (0.9 m/s), all controllers will fail. However, the respective risk of each controller is unclear. Given a nominal distribution $p$ for $v_d$ as a truncated normal distribution (of zero mean and a standard deviation of $0.5$), large magnitudes of $v_d$ that could fail the controller are thus clustered in the distribution ``tails''. Two importance functions $q_1$ and $q_2$, are introduced, each biased towards large magnitudes of $v_d$ in distinct ways within the sample space of $V_d = [-0.9,0.9]$. 

This particular case is chosen for two reasons. First, its dynamics are simple enough to obtain a nearly perfect ``ground truth'' risk estimate. Second, and more importantly, it theoretically emulates more challenging problems in the robot testing practice such as the push-over disturbance rejection tests of legged robots~\cite{weng2022safety,weng2023towards}. The intelligent feedback of a legged robot could amplify the uncertainties and the complexity of $f$, as shown by previous works in simulation and real-world~\cite{weng2022safety,weng2022rethink}. However, the proposed algorithms are not relying on those insights (i.e., the knowledge of uncertainties and $f$) anyway. The insights garnered from this case study not only shed light on the observed phenomenon but also hint at the potential applicability of these algorithms to the testing of legged robots, as presented with the Rabbit case in Section~\ref{sec:rabbit}.

\noindent\textbf{Observations and Analyses}: 
Fig.~\ref{fig:effort_pendulum} demonstrates the different outcomes of $\gamma$ (required number of sampled tests) and $\tau$ (estimation error bound in $\ell_1$-norm) presented with the proposed testing algorithm and the direct extension of~\cite{impagliazzo2022reproducibility} described in Section~\ref{sec:prob:direct}. Algorithm~\ref{alg:r2_tight} generally outperforms the other in terms of testing efficiency with similar estimation performance. As the $\bar{\tau}$ becomes large and $\tau$ becomes small, the difference between the two algorithms are less pronounced, which also aligns with the theoretical analysis. In practice, the $\gamma$-reliability can also be justified taking the smaller value between the two $\gamma$s revealed through Theorem~\ref{thm:r2_tight}. 

When comparing the proposed Algorithm~\ref{alg:r2_tight} with the traditional practice of RHW-based termination of importance sampling, the results are shown in Fig.~\ref{fig:utilit_pendulum}. Given the hyper-parameters set for Algorithm~\ref{alg:r2_tight}, Theorem~\ref{thm:r2_tight} guarantees a probability of 0.6 that the algorithm is repeatable, and the estimation error is bounded by $0.1$ with a close-to-one probability. Empirically revealed by the particular tests in Fig.~\ref{fig:utilit_pendulum}, Algorithm~\ref{alg:r2_tight} is shown 100\% repeatable, 100\% accurate within the error bound $\tau$, and reliable across the tests among all three subject controllers with the same testing effort. In comparison, when $s_r=0.001$ is pre-determined as the threshold for Algorithm~\ref{alg:is_testing} taking the RHW based termination condition, none of the tests is repeatable (revealed by the spreading dots on all sub-figures) or reliable (revealed by different sample sizes). Also, it is noted that for the non-repeatable outcomes of Algorithm \ref{alg:is_testing}, the estimation error does not decrease w.r.t. the sample size, which (i) indicates the $s_r$ threshold is already sufficiently small, and (ii) further addresses the non-repeatable and non-reliable nature of the commonly adopted RHW-based termination approach. Moreover, it may be noted that the average risk estimate across 100 attempts of Algorithm~\ref{alg:is_testing} appears to approximate the true risk ($r^*$) of the subject controllers under examination. Nonetheless, it's important to underline that the algorithm, as per its design, is not meant to be executed across such an extensive series of trials; standard practice dictates a \emph{single} execution. Implementing a procedure that necessitates multiple attempts and then determines the final risk estimate as the mean of these attempts would invariably escalate the testing effort.

\subsection{The legged-robot pushover}\label{sec:rabbit}
\noindent\textbf{Testing configuration}: The subject being tested is a 7-DoF 5-link planar legged robot named Rabbit, which is a widely studied benchmark legged robot system~\cite{castillo2019reinforcement,chevallereau2003rabbit}. Two fundamentally different locomotion controllers are implemented, including a model-based one using the Angular Momentum Linear Inverted Pendulum (ALIP) method~\cite{gong2021one}, and another incorporating template-based Reinforcement Learning (RL)~\cite{castillo2023template}. Both controllers are capable of walking forward and backward at a variety of speeds on flat surface without the presence of other disturbances. The testing action is the frontal push-over force $F_d$ following a parameterized periodic wave function $F_d(t) = M_d \text{sgn}(\sin(2\pi f_d t))$. We further have the magnitude $M_d \in [-20, 80]$ N and frequency $f_d \in [0.1,10]$ Hz. During the initialization of each test, the robot is commended to reach a steady-state walking speed of $v_s \in V_s = [-1,1]$ m/s, followed by the frontal impact disturbance determined by a sample $(M_d, f_d)$ over the parameter sample space $U_d = [-20, 80] \times [0.1,10]$ following a probability distribution. Since there are no legged robot testing standards available in providing the configuration of the appropriate target distribution $p$ or existing literature exploring the development of the importance distribution $q$ for this type of application~\cite{weng2022safety}, the following experiment assumes $q$ is a uniform distribution over $V_s \times U_d$, and $p$ is a truncated multi-variable normal distribution over the same sample space. Note with the intelligent and dynamic feedback of legged robot locomotion controllers, there does not exist any simple monotonic relation between the safety performance (risk of falling-over) and the disturbance parameters. For example, a large $M_d$ does not necessarily lead to a high risk. This has been revealed extensively with legged robot locomotion testing in simulation~\cite{weng2022safety} and in real-world~\cite{weng2023towards}. As a result, designing the appropriate importance distribution $q$ is also a challenge and is out of the scope of this paper, yet it does not affect the main contribution of this paper as discussed in Remark~\ref{rmk:not-matter}

\begin{figure*}[t]
    \centering
    \includegraphics[trim={2cm 6cm 1.5cm 0.8cm}, clip, width=0.99\textwidth]{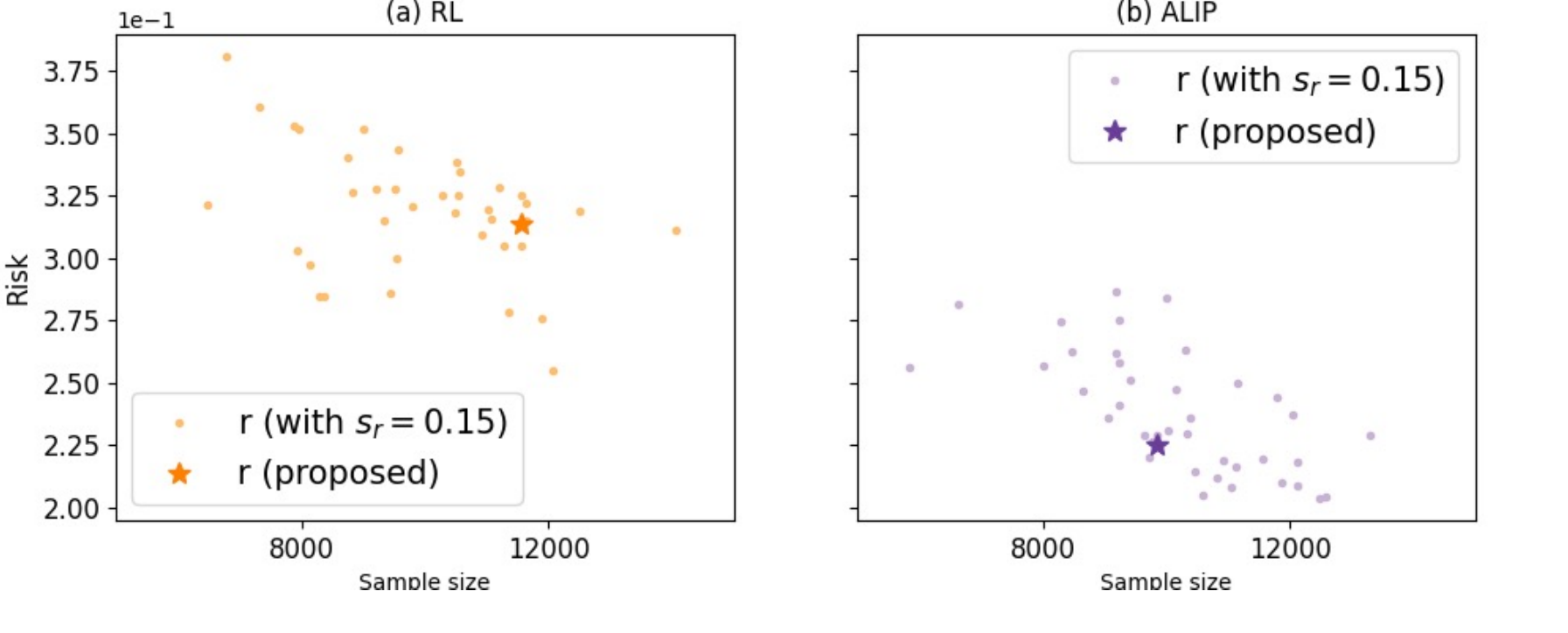}
    \vspace{-5mm}
    \caption{Repeatability and reliability comparison between Algorithm~\ref{alg:r2_tight} and Algorithm~\ref{alg:is_testing} with RHW-based termination condition for the Rabbit push-over task against two different controllers (RL and ALIP). Within each sub-figure, Algorithm~\ref{alg:r2_tight} is configured using identical specifications with $\beta=0.4, \bar{t}=0.35, \tau=0.2$, which leads to 10939 samples and a failure probability ($\epsilon$) that is extremely close to zero. It is repeated 35 times against each controller. The obtained risk estimates are identical, shown as the star in each sub-figure. The dots in all figures represent another 35 attempts of Algorithm~\ref{alg:is_testing} with the RHW threshold of $s_r=0.13$ as the termination criterion. Note $r^*$ is not presented in this study as there is no trackable solution to obtain the ground-truth risk estimate for this Rabbit push-over case.}
    \label{fig:utilit_rabbit}
    \vspace{-3mm}
\end{figure*}

\noindent\textbf{Observations and analyses}: From Fig.~\ref{fig:utilit_rabbit}, the repeatability and reliability properties are immediate. The RL policy also presents a higher risk than the ALIP policy under the same set of coefficients of $\beta$, $\gamma$, and $\alpha$. However, among the 1225 pairs ($35 \times 35$) of testing attempts of Algorithm~\ref{alg:is_testing} with RHW-based stopping criteria against RL and ALIP, 35 of them end up considering RL being of a lower risk. This reveals a particular problem of non-repeatable tests when subject performance are sufficiently close, leading to the wrong justification of the safer subject.

\section{Conclusion} \label{sec:conclusion}
To the best of our knowledge, this work has presented the first formal study of the importance sampling-based accelerated risk assessment testing algorithm that repeatably outputs the same risk estimate, and reliably works across a variety of testing subjects with high probability. Extending these findings to other robots, such as 3D legged robots, appears promising, though the challenge is not primarily related to the robot's configuration, as the proposal is agnostic of $f$ and is only sensitive to the distributions $p$ and $q$. A potential direction for further research could be to refine the sample size bounds presented in Theorem~\ref{thm:r2_tight}, which could significantly enhance testing efficiency. Another direction is to explore its applicability for different types of performance measures beyond risk, including Boolean outcomes and complex multi-dimensional measures.


\begin{credits}
\subsubsection{\disclaimer} Positions and opinions presented in this work are those of the authors and not necessarily those of Iowa State University, The Ohio State University, and Transportation Research Center Inc. Responsibility for the content of the work lies solely with the authors.


\end{credits}
%
%
%
\bibliographystyle{splncs04}
\bibliography{output}
%




\end{document}